\documentclass[10pt,twocolumn,letterpaper]{article}

\usepackage{dicta}
\usepackage{times}
\usepackage{epsfig}
\usepackage{graphicx}
\usepackage{amsmath}
\usepackage{amssymb}
\usepackage{balance}

\usepackage{amssymb,amsthm,mathtools}
\newtheorem{proposition}{Proposition}

\usepackage[pagebackref=true,breaklinks=true,letterpaper=true,colorlinks,bookmarks=false]{hyperref}

\usepackage{array}
\usepackage{booktabs}
\usepackage{fontawesome5}
\usepackage{multirow}

\usepackage{ragged2e}

\usepackage[table]{xcolor}
\usepackage{tikz}

\colorlet{mmwA}{magenta!85!purple}
\colorlet{mmwB}{magenta!75!purple}
\colorlet{mmwC}{magenta!60!blue}
\colorlet{mmwD}{purple!70!blue}
\colorlet{mmwE}{blue!75!purple}
\colorlet{mmwF}{blue!70!cyan}
\colorlet{mmwG}{cyan!75!blue}
\colorlet{mmwH}{cyan!85!blue}
\colorlet{mmwI}{cyan!90!blue}

\DeclareRobustCommand{\mmWaveColored}{%
  \textcolor{magenta!85!purple}{m}%
  \textcolor{magenta!60!blue}{m}%
  \textcolor{purple!70!blue}{W}%
  \textcolor{blue!70!cyan}{a}%
  \textcolor{cyan!75!blue}{v}%
  \textcolor{cyan!90!blue}{e}%
}

\newcommand{\coolname}{SGCA-Net}

\usepackage{pgfplots}
\usepgfplotslibrary{groupplots}
\pgfplotsset{compat=1.18}

\dictafinalcopy %

\def\dictaPaperID{100} %

\ifdictafinal\fi

\newlength{\copyrightleft}
\begin{document}

\title{Automotive {\texttt{\mmWaveColored}} Spinning Radar Place Recognition\\ with Spatially Gated Feature-Correlation Representation}

\author{
Saimunur Rahman$^{1}$,
Sagun Singh Shrestha$^{1}$,
Abdelwahed Khamis$^{1,2}$,
Peyman Moghadam$^{1,2}$\\
{\small $^{1}$CSIRO Robotics, CSIRO, Australia \quad
\texttt{firstname.lastname@csiro.au}}\\
{\small $^{2}$Queensland University of Technology, Australia \quad
\texttt{firstname.lastname@qut.edu.au}}
}

\maketitle

\begin{tikzpicture}[remember picture,overlay]
\node[
    anchor=south west,
    text width=\textwidth,
    inner xsep=0pt,
    inner ysep=0pt,
    align=center,
    font=\fontsize{7.5}{8.2}\selectfont
] at ([xshift=\copyrightleft,yshift=0.25in]current page.south west) {%
    \noindent\justifying
    \textcopyright~2026 IEEE. Personal use of this material is permitted.
    Permission from IEEE must be obtained for all other uses, in any current
    or future media, including reprinting/republishing this material for
    advertising or promotional purposes, creating new collective works,
    for resale or redistribution to servers or lists, or reuse of any
    copyrighted component of this work in other works.
};
\end{tikzpicture}

\begin{abstract}

Automotive spinning FMCW radar provides dense, $360^\circ$ sensing and remains reliable under poor illumination and adverse weather, making it well-suited to autonomous navigation. Place recognition uses these observations to identify previously visited locations for re-localization and long-term navigation. However, heading changes appear as circular shifts in the polar radar representation, and conventional global aggregation can lose relationships among radar responses that are important for distinguishing similar places. We propose \coolname{}, a spinning radar place recognition framework that combines rotation-robust feature extraction with Spatially Gated Correlation Aggregation (SGCA). SGCA learns spatial weights to reduce the influence of unstable and ambiguous radar regions, while aggregating pairwise correlations among local responses to preserve informative feature relationships. Experiments on the MulRan dataset show that \coolname{} consistently outperforms SOTA methods across urban, campus, and open-road environments, while remaining robust to substantial heading variation. Evaluation on the HeRCULES dataset further demonstrates that \coolname{} generalizes to unseen environments and radar sensors without fine-tuning.

\end{abstract}

\section{Introduction}
\label{sec:introduction}

Place recognition (PR) determines whether a current observation corresponds to a previously visited place and is central to long-term autonomous robot navigation, loop closure, and re-localization~\cite{yin2025general, knights2023wild}. PR is commonly formulated as a retrieval problem, where observations are encoded as compact global descriptors and a query is matched to a database using nearest-neighbour search, with candidate matches subsequently refined through a re-ranking method~\cite{sgv}. The effectiveness of this retrieval formulation therefore depends on descriptors that remain consistent across revisits while providing sufficient discrimination between geographically distinct locations ~\cite{arandjelovic2016netvlad}. The ability to construct such robust descriptors depends strongly on the sensing modality.  Cameras \cite{lowry2015visual,hausler2021patch,hausler2025pair} and LiDAR~\cite{knights2023wild,vidanapathirana2022logg3d} have been widely adopted for PR, while automotive mmWave frequency-modulated continuous-wave (FMCW) radar offers complementary robustness under poor illumination and adverse weather conditions, including rain, fog, and dust~\cite{barnes2020oxford,burnett2022ready,hong2020radarslam,kim2020mulran}. Among automotive radar configurations, spinning FMCW radar is particularly suitable for global descriptor-based PR~\cite{jang2023raplace,gadd2024open,komorowski2021radarloc,kim2025sherloc} because it provides dense, 360$^\circ$ azimuth-range measurements, in contrast to the sparser and more limited field-of-view observations typically produced by 4D automotive radar sensors~\cite{cai2022autoplace,casado2024spr,zhou2022towards}. 

A spinning FMCW radar scan is formed by mechanically rotating the radar antenna over azimuth while recording reflected signal power over range, producing a 360$^\circ$ azimuth-range polar representation~\cite{barnes2020oxford,kim2020mulran}. 
While this representation provides dense coverage of the surrounding environment, it introduces several challenges for place recognition. Changes in vehicle heading appear as circular shifts along the azimuth dimension, while the measured radar responses vary with reflector geometry, material properties, incidence angle, multipath, sidelobes, and range~\cite{hong2020radarslam,burnett2022ready,barnes2020oxford}. In addition, dynamic objects, occlusions, and transient clutter can alter the spatial distribution of returns between revisits. Consequently, observations of geographically distinct places may contain similar local radar responses, while observations of the same place may exhibit substantial variation across traversals~\cite{jang2023raplace,gadd2024open,komorowski2021radarloc,kim2025sherloc}. An effective radar descriptor must therefore preserve repeatable spatial structure across revisits while reducing the influence of unstable and ambiguous responses.

Existing radar PR methods have primarily addressed heading variation through radar-specific convolutional processing, circular operations, transform-domain representations, or rotation-aware descriptor design~\cite{jang2023raplace,gadd2024open,komorowski2021radarloc,kim2025sherloc}. However, the subsequent aggregation of local radar features into a compact global representation remains comparatively less explored. First-order aggregation methods, including average, max, and generalized mean (GeM) pooling, primarily summarize feature presence or response magnitude. Learned aggregation approaches, such as NetVLAD~\cite{arandjelovic2016netvlad} and its radar variants~\cite{gadd2024open, kim2025sherloc}, provide more expressive representations by aggregating local features, but largely characterize the distribution of individual responses. For spinning radar, this can discard informative co-occurrence structure across the scan, where place identity may depend on joint response patterns rather than isolated feature strength. A global descriptor should therefore preserve not only salient individual responses, but also relationships among responses that remain consistent across revisits.

Motivated by these observations, we introduce \coolname{}, a learned spinning FMCW radar place recognition framework that combines a rotation-robust radar backbone with a Spatially Gated Correlation Aggregation (SGCA) descriptor. The backbone operates directly on the azimuth-range polar representation and extracts local features that are robust to heading-induced circular shifts. The SGCA module subsequently projects these features into a compact descriptor space, learns spatial keep/dustbin weights to reduce the contribution of unstable or ambiguous locations, and aggregates pairwise correlations among the reweighted local responses. Unlike conventional first-order aggregation, which mainly summarizes feature presence or response strength, SGCA therefore produces a spatially gated feature-correlation representation to preserve repeatable radar structure. Power normalization further stabilizes the representation. %
The contributions of this work are summarized as follows:

\begin{itemize}
    \item We propose \coolname{}, a spinning radar place recognition
    framework that combines rotation-robust feature extraction with
    spatially gated correlation aggregation.

    \item We introduce a spatially weighted feature correlation
    representation that captures pairwise relationships among radar responses
    while reducing the influence of unstable and ambiguous regions.

    \item We demonstrate consistent improvements across diverse MulRan
    environments and show that the representation learned by
    \coolname{} transfers effectively to unseen sensor and environments without fine-tuning.
\end{itemize}

\section{Related Work}
\label{sec:related_work}
\noindent \textbf{Radar Place Recognition.}
Radar Place Recognition (RPR) has evolved from hand-crafted representations towards learned global features. ScanContext~\cite{kim2018scan} represents a scan using a polar grid and handles heading variation through circular alignment, while Ring Key provides a compact rotation-invariant summary derived from the same representation. Although these methods are efficient and interpretable, their hand-crafted formulations offer limited adaptability to the substantial appearance variation present in automotive radar observations. RadarLoc~\cite{komorowski2021radarloc} showed that a compact radar-specific convolutional network operating directly on polar radar images can learn effective global descriptors for large-scale place recognition. Its cylindrical processing and GeM pooling provide robustness to heading variation while maintaining a lightweight architecture.
More recent methods have explored alternative feature aggregation,
matching, and sensing configurations. Open-RadVLAD
~\cite{gadd2024open} aggregates local radar features using a
VLAD-based representation, while RaPlace~\cite{jang2023raplace}
employs Radon-transform-based processing to improve robustness to
changes in radar appearance. SHeRLoc~\cite{kim2025sherloc} extends
radar place recognition to heterogeneous radar sensors and includes
evaluation on spinning radar observations. Despite these advances,
most learned spinning radar approaches still compress their feature
maps using conventional global aggregation. Consequently, descriptor
formation is dominated by the strength or frequency of local responses, while interactions among feature responses receive limited attention. This is restrictive for spinning radar, where a place may be identified more reliably by recurring combinations of returns than by isolated strong measurements.

\noindent \textbf{Global Feature Aggregation.}
Global aggregation converts local feature maps into compact descriptors for efficient place retrieval. First-order methods, including average, maximum, and generalized mean (GeM) pooling~\cite{radenovic2018fine}, summarize individual feature responses but discard relationships between feature channels. GeM is commonly used for place recognition and is adopted by RadarLoc for spinning radar PR. Learned aggregation methods such as NetVLAD~\cite{arandjelovic2016netvlad} and Open-RadVLAD~\cite{gadd2024open} provide more expressive descriptors by modelling the distribution of local features, but do not explicitly capture relationships among feature responses. This is limiting for automotive radar, where feature maps may contain empty regions, multipath, transient objects, and repeated structures that can introduce unstable or ambiguous responses.

\noindent \textbf{Pairwise Feature Correlation Aggregation.}
Pairwise correlation representations model dependencies between feature channels through bilinear or covariance statistics. Bilinear pooling~\cite{gao2016compact} and covariance-based representations~\cite{li2018towards} have shown that such interactions can improve discrimination beyond first-order aggregation. However, directly applying correlation aggregation to radar is problematic because unreliable spatial locations can influence the resulting correlation statistics. \coolname{} addresses this by applying learned spatial gating before correlation aggregation, reducing the contribution of unstable regions while retaining informative relationships among feature responses. Matrix square-root normalization~\cite{li2018towards} is then used to stabilize the resulting representation.

\begin{figure*}[t]
    \centering
    \includegraphics[width=1\linewidth]{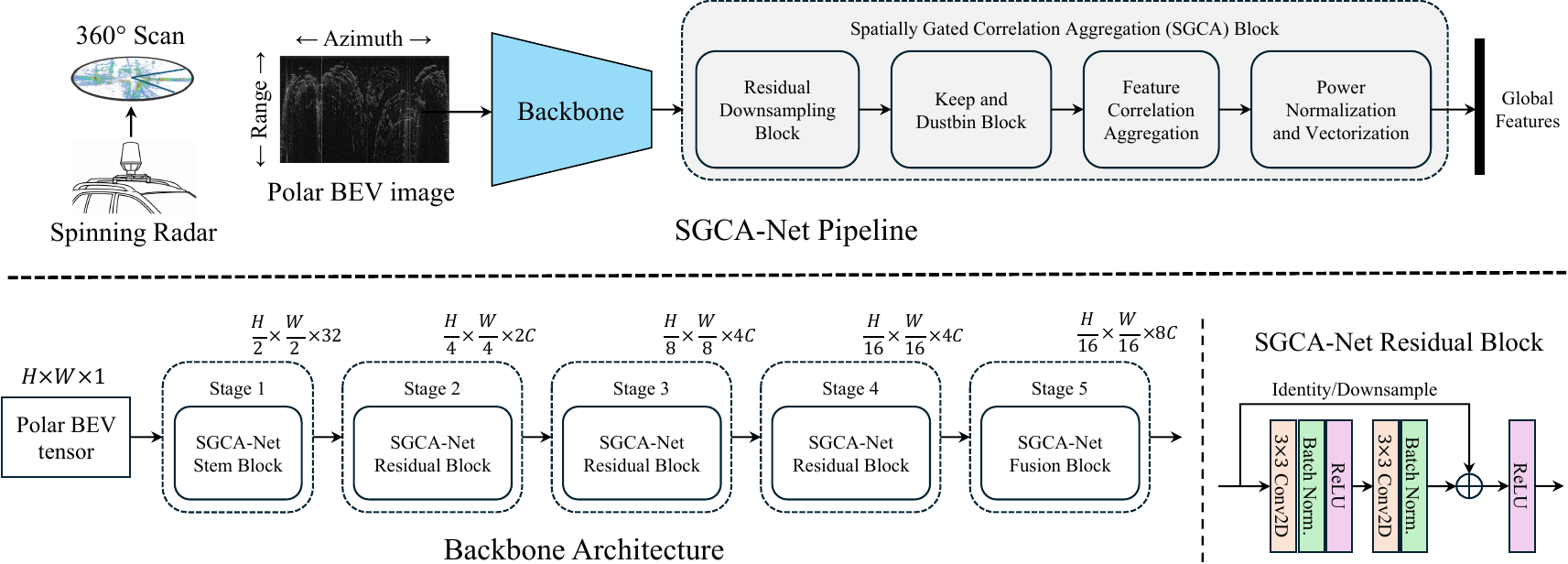}
    \caption{Overview of SGCA-Net. The input is a polar BEV radar image, with azimuth and range as image axes. The image is processed by a stem block and residual backbone stages that reduce spatial resolution while increasing feature channels. The resulting feature map is passed to the SGCA block, which performs residual downsampling, keep-and-dustbin selection, feature-correlation aggregation, power normalization, and vectorization to produce a global descriptor.}
    \label{fig:pipeline}
\end{figure*}

\section{Methodology}
\label{sec:methodology}

SGCA-Net is a deep metric learning framework for spinning FMCW radar place recognition. Given a polar radar image $\mathbf{X}\in\mathbb{R}^{H\times W\times1}$, the network extracts a dense feature map using a rotation-robust radar backbone and converts it into a compact global descriptor through the proposed SGCA pooling module, as illustrated in Fig.~\ref{fig:pipeline}. The complete descriptor extraction pipeline is expressed as $\mathbf{d}=f(\mathbf X)=\Psi(\Phi(\mathbf X))$, where $\Phi(\cdot)$ denotes the backbone, $\Psi(\cdot)$ denotes the SGCA module, and $\mathbf{d}\in\mathbb{R}^{D}$ is the final descriptor. The network is trained end-to-end with a rotation-robust metric learning objective, described in Sec. \ref{sec:Triplet}

\subsection{Radar Polar Image Backbone}

The backbone converts the range-azimuth radar scan $\mathbf{X}\in\mathbb{R}^{H\times W\times1}$ into a dense feature map for SGCA (shown in Fig.~\ref{fig:pipeline}). It consists of a convolutional stem, four residual stages, and a top-down fusion block. 
We denote a cylindrically padded $k\times k$ 2D convolution with stride $s$ by $\mathcal{C}_{k,s}(\cdot)$, batch normalization by $\mathcal{B}(\cdot)$, ReLU by $\sigma(\cdot)$, and transposed convolution by $\mathcal{T}_{k,s}(\cdot)$.

Since spinning radar measurements are periodic in azimuth, conventional zero padding introduces an artificial discontinuity at the $0^\circ$ and $360^\circ$ boundaries. We therefore apply cylindrical padding along the azimuth dimension while retaining zero padding along range: 
\begin{equation}
\tilde{\mathbf{X}}(r,c)=
\begin{cases}
\mathbf{X}(r,c+W), & c<0,\\
\mathbf{X}(r,c), & 0\le c<W,\\
\mathbf{X}(r,c-W), & c\ge W.
\end{cases}
\end{equation}
This allows convolutional kernels near the azimuth boundary to operate on physically continuous neighbourhoods, improving robustness to heading-induced circular shifts. The stem applies a $5\times5$ stride-2 convolution, batch normalization, and ReLU, giving $\mathbf F_0=\sigma(\mathcal B(\mathcal C_{5,2}(\mathbf X)))$, where $\mathbf F_0\in\mathbb{R}^{\frac{H}{2}\times\frac{W}{2}\times32}$.
The stem output is processed by four residual stages. Each residual block contains two $3\times3$ convolutions and a shortcut connection, as shown in bottom right of Fig.~\ref{fig:pipeline}. For an input feature map $\mathbf{U}$, the residual mapping is
\begin{equation}
\mathcal{R}(\mathbf{U}) =
\mathcal B\left(
\mathcal C_{3,1}\left(
\sigma\left(
\mathcal B\left(
\mathcal C_{3,s}(\mathbf{U})
\right)\right)\right)\right),
\end{equation}
and the block output is $\mathbf{Y}=\sigma(\mathcal{R}(\mathbf{U})+\mathcal{S}(\mathbf{U}))$, where
\begin{equation}
\mathcal S(\mathbf{U})=
\begin{cases}
\mathbf{U} & \text{if dimensions are unchanged},\\
\mathcal B\left(\mathcal C_{1,2}(\mathbf{U})\right), & \text{otherwise}.
\end{cases}
\end{equation}

The projection shortcut is used when spatial downsampling or channel expansion is required. The four stages produce
\begin{equation}
\begin{aligned}
\mathbf F_1&\in\mathbb{R}^{\frac{H}{4}\times\frac{W}{4}\times32},&
\mathbf F_2&\in\mathbb{R}^{\frac{H}{8}\times\frac{W}{8}\times64},\\
\mathbf F_3&\in\mathbb{R}^{\frac{H}{16}\times\frac{W}{16}\times128},&
\mathbf F_4&\in\mathbb{R}^{\frac{H}{32}\times\frac{W}{32}\times256}.
\end{aligned}
\end{equation}

The fusion block combines the deepest feature map with the intermediate feature map $\mathbf F_3$. The deepest feature is projected and upsampled as
\begin{equation}
\hat{\mathbf P}_3=
\mathcal T_{2,2}\left(\mathcal C_{1,1}(\mathbf F_4)\right),
\quad
\hat{\mathbf P}_3\in\mathbb{R}^{\frac{H}{16}\times\frac{W}{16}\times256}.
\end{equation}
The lateral feature is projected as $\mathbf L_3=\mathcal C_{1,1}(\mathbf F_3)$, with $\mathbf L_3\in\mathbb{R}^{\frac{H}{16}\times\frac{W}{16}\times256}$. The fused backbone output is
\begin{equation}
\mathbf F=\hat{\mathbf P}_3+\mathbf L_3,
\quad
\mathbf F\in\mathbb{R}^{\frac{H}{16}\times\frac{W}{16}\times256}.
\end{equation}
This fused tensor is passed to the SGCA module to form the final global descriptor. For notational simplicity, we define the spatial dimensions of the fused backbone feature map as $h = H/16$ and $w = W/16$, such that $\mathbf{F} \in \mathbb{R}^{h \times w \times 256}$.

\subsection{Spatially Gated Correlation Aggregation}

The fused backbone output $\mathbf{F}\in\mathbb{R}^{h\times w\times 256}$ is first projected into a compact feature space for descriptor construction. Let $\mathcal{P}_s(\cdot)$, $\mathcal{P}_1(\cdot)$, and $\mathcal{P}_2(\cdot)$ denote $1\times1$ projection convolutions, and let $\rho(\cdot)$ denote the GELU activation. The projected feature map $\mathbf{Z}\in\mathbb{R}^{h\times w\times64}$ is computed as

\begin{equation}
\mathbf{Z}
=
\mathcal{B}\!\left(\mathcal{P}_s(\mathbf{F})\right)
+
\mathcal{B}\!\left(
\mathcal{P}_2
\!\left(
\rho
\!\left(
\mathcal{B}
\!\left(
\mathcal{P}_1(\mathbf{F})
\right)
\right)
\right)
\right).
\label{eq:projection}
\end{equation}

This residual bottleneck reduces the channel dimension from 256 to 64 while preserving backbone information through the residual branch, producing a feature basis tailored for descriptor formation.

A learned keep/dustbin gate then estimates the spatial importance of each projected local feature. A $1\times1$ convolution predicts two logits at every spatial location,
\begin{equation}
\mathbf{Q}
=
\mathcal{G}(\mathbf{Z}),
\qquad
\mathbf{Q}\in\mathbb{R}^{h\times w\times 2},
\end{equation}
where $\mathcal{G}(\cdot)$ denotes the gate projection. Applying a softmax over the two channels yields the keep probability $\alpha_j$ for each flattened spatial location. These probabilities are normalized across the $K=hw$ spatial locations to obtain the spatial weights: 
\begin{equation}
a_j
=
\frac{\alpha_j}
{\sum_{\ell=1}^{K}\alpha_\ell},
\qquad
a_j\geq0,
\qquad
\sum_{j=1}^{K}a_j=1.
\end{equation}

The resulting weights increase the contribution of informative radar regions while reducing the influence of unstable or ambiguous responses.
The ungated variant is recovered by setting $a_j=1/K$.
The projected feature map is then flattened into 
local descriptors 
$\{\mathbf{z}_j\}_{j=1}^{K}$, where $\mathbf{z}_j\in\mathbb{R}^{64}$.
The weighted feature mean is
$\boldsymbol{\mu}=\sum_{j=1}^{K}a_j\mathbf{z}_j$,
and the spatially weighted feature-correlation representation is computed as
\begin{equation}
\mathbf{\Sigma}
=
\sum_{j=1}^{K}
a_j
(\mathbf{z}_j-\boldsymbol{\mu})
(\mathbf{z}_j-\boldsymbol{\mu})^{\top}.
\label{eq:correlation}
\end{equation}

Although Eq.~\eqref{eq:correlation}  is mathematically a weighted covariance matrix, its entries capture pairwise relationships between projected feature channels and therefore preserve co-occurrence structure that is discarded by first-order pooling. The spatial weighting further limits the contribution of unstable radar regions to the resulting representation. The weighted covariance matrix also has a useful structural property for subsequent matrix normalization.

\begin{proposition}
If $a_j\ge0$ for all $j$ and $\sum_{j=1}^{K}a_j=1$, then the matrix $\mathbf{\Sigma}$ defined in Eq.~\eqref{eq:correlation} is symmetric positive semidefinite.
\end{proposition}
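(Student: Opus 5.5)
The plan is to verify the two properties separately, working directly from the definition in Eq.~\eqref{eq:correlation}. Write $\mathbf{v}_j=\mathbf{z}_j-\boldsymbol{\mu}\in\mathbb{R}^{64}$, so that $\mathbf{\Sigma}=\sum_{j=1}^{K}a_j\mathbf{v}_j\mathbf{v}_j^{\top}$. Each summand is a nonnegative scalar multiple of a rank-one outer product, and both symmetry and positive semidefiniteness are preserved under nonnegative linear combinations. The argument will therefore reduce to checking these properties for a single term $\mathbf{v}_j\mathbf{v}_j^{\top}$.

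\textbf{Symmetry.} Since $(\mathbf{v}_j\mathbf{v}_j^{\top})^{\top}=\mathbf{v}_j\mathbf{v}_j^{\top}$ and transposition is linear, we get $\mathbf{\Sigma}^{\top}=\sum_j a_j(\mathbf{v}_j\mathbf{v}_j^{\top})^{\top}=\mathbf{\Sigma}$. This step needs no assumption on the weights beyond their being real scalars.

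\textbf{Positive semidefiniteness.} For an arbitrary $\mathbf{x}\in\mathbb{R}^{64}$, compute the quadratic form:
\begin{equation*}
\mathbf{x}^{\top}\mathbf{\Sigma}\mathbf{x}=\sum_{j=1}^{K}a_j\,\mathbf{x}^{\top}\mathbf{v}_j\mathbf{v}_j^{\top}\mathbf{x}=\sum_{j=1}^{K}a_j\,(\mathbf{v}_j^{\top}\mathbf{x})^{2}\ \ge\ 0,
\end{equation*}
because every $a_j\ge 0$ and every squared real number is nonnegative.

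There is no real obstacle here. The only point worth flagging is the role of the normalization $\sum_j a_j=1$. It is not needed for positive semidefiniteness, which holds for any nonnegative weights and any centering vector. Its purpose is to make $\boldsymbol{\mu}$ a genuine weighted mean, so that $\mathbf{\Sigma}$ is a true weighted covariance, for example with $\sum_j a_j\mathbf{v}_j=\mathbf{0}$. I would remark on this briefly and then note the consequence used later in the paper: $\mathbf{\Sigma}$ has real nonnegative eigenvalues and an orthogonal eigendecomposition. Hence its principal matrix square root, used for power normalization, is well defined and is itself symmetric positive semidefinite.
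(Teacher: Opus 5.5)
Your proof is correct and follows the same route as the paper: symmetry comes from the outer-product form, and positive semidefiniteness from writing the quadratic form as $\sum_j a_j(\mathbf{v}_j^{\top}\mathbf{x})^2\ge 0$. Your remark that $\sum_j a_j=1$ is not needed for this conclusion is also correct, since the paper's argument likewise uses only $a_j\ge 0$.
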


\begin{proof}
Symmetry follows directly from Eq.~\eqref{eq:correlation}. For any vector $\mathbf{v}\in\mathbb{R}^{64}$,
\[
\mathbf{v}^{\top}\mathbf{\Sigma}\mathbf{v}
=
\sum_{j=1}^{N}
a_j
\left[
\mathbf{v}^{\top}
(\mathbf{z}_j-\boldsymbol{\mu})
\right]^2
\ge0,
\]
since $a_j\ge0$. Hence, $\mathbf{\Sigma}$ is positive semidefinite.
\end{proof}

This property enables stable matrix-function normalization. To improve numerical conditioning, trace normalization is first applied as
$t=\max(\operatorname{tr}(\mathbf{\Sigma}),\epsilon)$ and
$\bar{\mathbf{\Sigma}}=\mathbf{\Sigma}/t$.
Matrix square-root normalization is then computed using the Newton--Schulz iteration \cite{li2018towards}, and the normalized matrix is rescaled as
$\mathbf{M}=\bar{\mathbf{\Sigma}}^{1/2}\sqrt{t}$.
The final global descriptor is obtained by vectorizing the upper-triangular entries,

\begin{equation}
\mathbf{d}
=
\operatorname{vecu}(\mathbf{M}).
\label{eq:descriptor}
\end{equation}

Since $\mathbf{M}\in\mathbb{R}^{64\times64}$ is symmetric,
vectorizing its upper-triangular entries produces a
$D=2080$-dimensional global descriptor.

\subsection{Rotation-Robust Triplet Loss}
\label{sec:Triplet}
To improve robustness to heading variation, each training scan is circularly shifted along the azimuth axis before descriptor extraction.
Let $\mathcal{Q}_{\delta}(\cdot)$ denote an azimuthal roll by $\delta$, where $\delta$ is sampled uniformly over rotations up to $180^\circ$. For training sample $i$, the descriptor is therefore $\mathbf d_i=f(\mathcal{Q}_{\delta_i}(\mathbf X_i))$.
For each anchor $i$ in a mini-batch, batch-hard mining selects the hardest positive and hardest negative:
\begin{equation}
p_i^\ast=\arg\max_{p\in\mathcal P(i)}
\|\mathbf d_i-\mathbf d_p\|_2,
\;
n_i^\ast=\arg\min_{n\in\mathcal N(i)}
\|\mathbf d_i-\mathbf d_n\|_2 .
\end{equation}

The training objective is
\begin{equation}
\mathcal L_{\mathrm{triplet}}
=
\frac{1}{|\mathcal A|}
\sum_{i\in\mathcal A}
\left[
\|\mathbf d_i-\mathbf d_{p_i^\ast}\|_2
-
\|\mathbf d_i-\mathbf d_{n_i^\ast}\|_2
+
m
\right]_+ ,
\end{equation}
where $\mathcal{A}$ denotes the set of anchors in the mini-batch,
$\mathcal{P}(i)$ and $\mathcal{N}(i)$ are the positive and negative
sets for anchor $i$, $[x]_{+}=\max(0,x)$, and $m=0.2$ is the triplet
margin.

\section{Experimental Setup}
\label{sec:experimental_setup}

\begin{table*}[t]
\centering
\caption{Recall@1\% Radar Place Recognition on the MulRan \cite{kim2020mulran} evaluation sequences.}
\label{tab:sota}
\setlength{\tabcolsep}{12.0pt}
\renewcommand{\arraystretch}{1.05}
\begin{tabular}{lcccccccc}
\toprule
& \multicolumn{2}{c}{\textbf{Sejong}}
& \multicolumn{2}{c}{\textbf{KAIST}}
& \multicolumn{2}{c}{\textbf{Riverside}}
& \multicolumn{2}{c}{\textbf{Mean}} \\
\cmidrule(lr){2-3}
\cmidrule(lr){4-5}
\cmidrule(lr){6-7}
\cmidrule(lr){8-9}
Method
& 5\,m & 10\,m
& 5\,m & 10\,m
& 5\,m & 10\,m
& 5\,m & 10\,m \\
\midrule
Ring Key \cite{kim2018scan}
& 50.3 & 59.4
& 80.5 & 83.8
& 49.7 & 59.5
& 60.2 & 67.6 \\
ScanContext \cite{kim2018scan}
& 86.8 & 87.9
& 93.5 & 94.6
& 67.1 & 77.2
& 82.5 & 86.6 \\
VGG-16+NetVLAD \cite{komorowski2021radarloc}
& 78.9 & 93.8
& 88.9 & 93.7
& 61.3 & 83.4
& 76.4 & 90.3 \\
RadarLoc \cite{komorowski2021radarloc}
& 92.9 & 98.8
& 95.9 & 98.8
& 74.4 & 92.3
& 87.7 & 96.6 \\
\rowcolor{green!20}
\textbf{\coolname{} (Ours)}
& \textbf{97.4} & \textbf{99.9}
& \textbf{97.5} & \textbf{99.2}
& \textbf{78.9} & \textbf{95.2}
& \textbf{91.3} & \textbf{98.1} \\
\bottomrule
\end{tabular}
\end{table*}

\noindent \textbf{Dataset and Evaluation Environments.}
We evaluate \coolname{} on the MulRan dataset~\cite{kim2020mulran}, which provides $360^\circ$ radar measurements collected using a Navtech CIR204-H FMCW radar with a maximum range of 200\,m, an angular resolution of $0.9^\circ$, and a radial resolution of 0.06\,m. Each scan is represented as a polar image with 400 azimuth bins and 3,360 range bins and is downsampled to $384\times128$ pixels following~\cite{komorowski2021radarloc}. Evaluation is conducted across the Sejong, KAIST, and Riverside environments, which span urban, campus, and open-road conditions with varying scene structure, traffic, and landmark density.

\noindent \textbf{Training and Evaluation Protocol.}
We follow the training and evaluation splits introduced in~\cite{komorowski2021radarloc}. Training uses predefined regions from the Sejong 01 and 02 traversals, yielding 13,611 radar scans after pre-processing. Evaluation is conducted on geographically disjoint regions from Sejong, KAIST, and Riverside, using traversal 02 as the reference database and traversal 01 as the query sequence. This results in 1,366/1,337 database/query scans for Sejong, 3,209/3,420 for KAIST, and 2,117/2,043 for Riverside.

\noindent \textbf{Evaluation Metric.}
Performance is measured using Recall@1 (R@1). For each query, database scans are ranked by Euclidean distance between global descriptors, and retrieval is considered correct when the top-ranked scan lies within a specified distance of the ground-truth position. We report R@1 at 5\,m and 10\,m for comparison with prior work, and additionally at 3\,m to assess finer place discrimination. Results are reported for each environment and as their unweighted mean.

\noindent \textbf{Implementation Details.}
\coolname{} is implemented in PyTorch and trained using AdamW with an initial learning rate of $1\times10^{-4}$ and a batch size of 256. A step-based learning-rate schedule and random erasing are used during training. Unless otherwise stated, all compared variants use the same training data, optimization settings, and evaluation protocol.

\section{Results}

\subsection{Comparison with Existing Methods}
\label{sec:comparison}

We compare \coolname{} with representative hand-crafted and learned
radar place recognition methods evaluated under the same MulRan
protocol. Ring Key~\cite{kim2018scan} is a compact,
rotation-invariant summary derived from ScanContext, whereas
ScanContext~\cite{kim2018scan} retains the full hand-crafted polar
representation and accounts for heading changes through circular
alignment. VGG-16+NetVLAD~\cite{komorowski2021radarloc} combines a
comparatively large general-purpose convolutional backbone with
learned VLAD aggregation. RadarLoc~\cite{komorowski2021radarloc}
instead uses a compact radar-specific convolutional backbone followed by GeM pooling. Although more recent radar place recognition methods
have been proposed, many use different sensor configurations,
datasets, or evaluation protocols; we therefore use the methods
reported under this common benchmark for direct comparison.
Tab.~\ref{tab:sota} reports Recall@1 at the standard 5\,m and
10\,m thresholds.

\coolname{} achieves the highest R@1 on every sequence and at both
localization thresholds. Compared with RadarLoc, the mean R@1
increases from 87.7\% to 91.3\% at 5\,m and from 96.6\% to 98.1\%
at 10\,m. \coolname{} also substantially outperforms
VGG-16+NetVLAD despite using a smaller backbone. This
suggests that the improvement is not obtained through backbone
capacity, but through rotation-robust feature extraction and SGCA representation. The sequence-wise gains remain consistent across different operating
conditions. On Sejong, SGCA-Net improves the 5\,m Recall@1 from
92.9\% to 97.4\% despite changes between dense urban and highway-like
sections. On KAIST, performance increases from 95.9\% to 97.5\% in
the presence of repeated campus layouts. The improvement is also
maintained on Riverside, where sparse persistent landmarks and
repetitive geometry make place discrimination more difficult:
R@1 increases from 74.4\% to 78.9\% at 5\,m and from 92.3\%
to 95.2\% at 10\,m.

\begin{figure}[t]
    \centering

    \begin{tikzpicture}[baseline]
        \draw[
            fill=blue!35,
            draw=blue!80!black,
            line width=0.6pt
        ] (0,0) rectangle (0.16,0.16);
        \node[anchor=west, font=\normalsize]
            at (0.24,0.08) {RadarLoc };

        \draw[
            fill=orange!45,
            draw=orange!85!black,
            line width=0.6pt
        ] (2.05,0) rectangle (2.21,0.16);
        \node[anchor=west, font=\normalsize]
            at (2.29,0.08) {\coolname{}};
    \end{tikzpicture}

    \vspace{1mm}

    \begin{tikzpicture}[x=\columnwidth/7.0,y=1cm]

        \draw[->, line width=0.55pt]
            (0,0) -- (0,2.55);

        \node[
            rotate=90,
            font=\normalsize
        ] at (-0.38,1.25) {Recall@1\%};

            \begin{scope}[xshift=0.35cm]
        
            \draw[line width=0.5pt] (0,0) -- (1.15,0);

            \filldraw[
                fill=blue!35,
                draw=blue!80!black,
                line width=0.65pt
            ] (0.18,0) rectangle
              (0.53,{(69.6-68)/12*2.00});

            \filldraw[
                fill=orange!45,
                draw=orange!85!black,
                line width=0.65pt
            ] (0.62,0) rectangle
              (0.97,{(77.3-68)/12*2.00});

            \node[font=\footnotesize, anchor=south]
                at (0.355,{(69.6-68)/12*2.00}) {69.6};

            \node[font=\footnotesize, anchor=south]
                at (0.795,{(77.3-68)/12*2.00}) {77.3};

            \node[font=\footnotesize]
                at (0.575,2.24) {$+7.7$ pp};

            \node[font=\normalsize]
                at (0.575,-0.28) {Sejong};
        \end{scope}

        \begin{scope}[xshift=2.05cm]
            \draw[line width=0.5pt] (0,0) -- (1.15,0);

            \filldraw[
                fill=blue!35,
                draw=blue!80!black,
                line width=0.65pt
            ] (0.18,0) rectangle
              (0.53,{(83.1-82)/5*2.00});

            \filldraw[
                fill=orange!45,
                draw=orange!85!black,
                line width=0.65pt
            ] (0.62,0) rectangle
              (0.97,{(85.5-82)/5*2.00});

            \node[font=\footnotesize, anchor=south]
                at (0.355,{(83.1-82)/5*2.00}) {83.1};

            \node[font=\footnotesize, anchor=south]
                at (0.795,{(85.5-82)/5*2.00}) {85.5};

            \node[font=\footnotesize]
                at (0.575,2.24) {$+2.4$ pp};

            \node[font=\normalsize]
                at (0.575,-0.28) {KAIST};
        \end{scope}

        \begin{scope}[xshift=3.75cm]
            \draw[line width=0.5pt] (0,0) -- (1.15,0);

            \filldraw[
                fill=blue!35,
                draw=blue!80!black,
                line width=0.65pt
            ] (0.18,0) rectangle
              (0.53,{(47.2-46)/7*2.00});

            \filldraw[
                fill=orange!45,
                draw=orange!85!black,
                line width=0.65pt
            ] (0.62,0) rectangle
              (0.97,{(51.3-46)/7*2.00});

            \node[font=\footnotesize, anchor=south]
                at (0.355,{(47.2-46)/7*2.00}) {47.2};

            \node[font=\footnotesize, anchor=south]
                at (0.795,{(51.3-46)/7*2.00}) {51.3};

            \node[font=\footnotesize]
                at (0.575,2.24) {$+4.1$ pp};

            \node[font=\normalsize]
                at (0.575,-0.28) {Riverside};
        \end{scope}

        \begin{scope}[xshift=5.45cm]
            \draw[line width=0.5pt] (0,0) -- (1.15,0);

            \filldraw[
                fill=blue!35,
                draw=blue!80!black,
                line width=0.65pt
            ] (0.18,0) rectangle
              (0.53,{(66.6-65)/8*2.00});

            \filldraw[
                fill=orange!45,
                draw=orange!85!black,
                line width=0.65pt
            ] (0.62,0) rectangle
              (0.97,{(71.4-65)/8*2.00});

            \node[font=\footnotesize, anchor=south]
                at (0.355,{(66.6-65)/8*2.00}) {66.6};

            \node[font=\footnotesize, anchor=south]
                at (0.795,{(71.4-65)/8*2.00}) {71.4};

            \node[font=\footnotesize]
                at (0.575,2.24) {$+4.8$ pp};

            \node[font=\normalsize]
                at (0.575,-0.28) {Mean};
        \end{scope}

    \end{tikzpicture}

    \caption{R@1 at the stricter 3\,m localization threshold. Values
    above the bars report R@1, while annotations show the absolute
    improvement of \coolname{} over RadarLoc in percentage points.}
    \label{fig:three_meter}
\end{figure}

\begin{table*}[t]
\centering
\caption{Ablation study of the proposed SGCA block on MulRan. Entries
report R@1\% at 3\,m, 5\,m, and 10\,m. Mean is computed across
the Sejong, KAIST, and Riverside sequences. The full \coolname{} configuration is
highlighted.}
\label{tab:ablation}
\setlength{\tabcolsep}{6.7pt}
\renewcommand{\arraystretch}{1.05}
\resizebox{\textwidth}{!}{
\begin{tabular}{lcccccccccccc}
\toprule
& \multicolumn{3}{c}{Sejong}
& \multicolumn{3}{c}{KAIST}
& \multicolumn{3}{c}{Riverside}
& \multicolumn{3}{c}{Mean} \\
\cmidrule(lr){2-4}
\cmidrule(lr){5-7}
\cmidrule(lr){8-10}
\cmidrule(lr){11-13}
Method
& 3\,m & 5\,m & 10\,m
& 3\,m & 5\,m & 10\,m
& 3\,m & 5\,m & 10\,m
& 3\,m & 5\,m & 10\,m \\
\midrule
SGCA-Net w/ GeM pooling
& 69.6 & 93.0 & 98.8
& 83.1 & 95.8 & 98.8
& 47.2 & 74.3 & 92.3
& 66.6 & 87.7 & 96.6 \\

SGCA-Net w/o gate
& 73.9 & 95.5 & 99.5
& 84.2 & 96.5 & 98.9
& 50.1 & 76.1 & 94.3
& 69.4 & 89.4 & 97.6 \\

SGCA-Net w/ standard projection
& 75.0 & 96.3 & 99.5
& 84.0 & 96.6 & 99.0
& 49.5 & 77.5 & 94.2
& 69.5 & 90.1 & 97.6 \\

\rowcolor{green!20}
\textbf{SGCA-Net}
& \textbf{77.3} & \textbf{97.4} & \textbf{99.9}
& \textbf{85.5} & \textbf{97.5} & \textbf{99.2}
& \textbf{51.3} & \textbf{78.9} & \textbf{95.2}
& \textbf{71.4} & \textbf{91.3} & \textbf{98.1} \\
\bottomrule
\end{tabular}}
\end{table*}

We additionally evaluate performance at the stricter
3\,m threshold, as shown in Fig.~\ref{fig:three_meter}. Since this
threshold was not reported in the original RadarLoc study, we reproduce
the RadarLoc results using its public implementation. \coolname{}
improves R@1 across all three evaluation environments, with gains of
7.7\%, 2.4\%, and 4.1\% on Sejong, KAIST, and Riverside,
respectively. Overall, the mean R@1 increases from 66.6\% to 71.4\%. 
These results show that \coolname{} provides reliable
discrimination when geographically close places produce similar radar observations.

\subsection{Ablation Analysis}
\label{sec:ablation_analysis}

We analyse the contribution of the \coolname{} components and evaluate
robustness to heading variation.

\paragraph{\coolname{} Component Ablation.}
We conduct an ablation study to quantify the contribution of each
\coolname{} component, as reported in Tab.~\ref{tab:ablation}. The GeM
variant replaces the SGCA block with generalized mean pooling \cite{radenovic2018fine}, a widely
used global aggregation method also adopted by RadarLoc. This
variant highlights the effect of replacing GeM with SGCA while retaining
the same radar-specific backbone and training configuration. The
ungated variant retains feature-correlation aggregation but assigns
uniform weights to all feature-map locations. The standard-projection
variant includes the learned keep/dustbin gate but replaces the residual
bottleneck projection with a single projection layer. All variants use
the same training data, optimization settings, and evaluation protocol.

Replacing GeM pooling with ungated feature-correlation aggregation
increases the mean R@1 from 66.6\% to 69.4\% at 3\,m, from 87.7\% to
89.4\% at 5\,m, and from 96.6\% to 97.6\% at 10\,m. These results
demonstrate the benefit of modelling feature correlations even when all
feature-map locations are weighted uniformly. Adding the keep/dustbin
gate and residual bottleneck projection yields further improvements.
The complete \coolname{} achieves the highest mean R@1 at all
thresholds, reaching 71.4\%, 91.3\%, and 98.1\% at 3\,m, 5\,m, and
10\,m, respectively.

\paragraph{Robustness to Heading Variation.}
We evaluate \coolname{} under random query rotations of up to
$180^{\circ}$, following the rotation-enabled setting used by RadarLoc.
As reported in Tab.~\ref{tab:rotation}, \coolname{} improves the mean
R@1 over RadarLoc from 65.9\% to 70.6\% at 3\,m, from 87.4\% to
91.1\% at 5\,m, and from 96.5\% to 98.1\% at 10\,m. The 
consistent improvements show that the SGCA components strengthen place
discrimination while retaining robustness to substantial heading
changes.

\begin{table}[t]
\centering
\caption{Mean R@1\% under random query rotations of up to
$180^{\circ}$.}
\label{tab:rotation}
\setlength{\tabcolsep}{6pt}
\renewcommand{\arraystretch}{1.05}
\begin{tabular}{lccc}
\toprule
Method & 3\,m & 5\,m & 10\,m \\
\midrule
RadarLoc & 65.9 & 87.4 & 96.5 \\
\rowcolor{green!20}
\textbf{SGCA-Net}
& \textbf{70.6}
& \textbf{91.1}
& \textbf{98.1}\\
\bottomrule
\end{tabular}
\end{table}

 \subsection{Transfer to Unseen Radar Environments}
\label{sec:cross_benchmark}

Generalization to unseen environments remains a challenge for place recognition, with prior work investigating test-time adaptation to compensate for domain shift~\cite{geoadapt2024}. Here, we instead examine whether the representation learned by \coolname{} transfers beyond the MulRan training environments without adaptation or fine-tuning.
transfers beyond the MulRan training environments. 
Following the spinning radar evaluation protocol used in~\cite{kim2025sherloc}, \coolname{} is trained only on MulRan
and evaluated without fine-tuning on the Sports Complex and Library
sequences of HeRCULES \cite{kim2025hercules}. RadarLoc~\cite{komorowski2021radarloc} is evaluated under the same setting to
provide a consistent reference. The session pairs include
clear-to-night/cloudy and clear-to-snowy changes. We report R@1 for
each pair, while AR@1 denotes the mean R@1 across all four pairs.

\begin{table}[t]
    \centering
    \caption{Transfer evaluation on the HeRCULES \cite{kim2025hercules} spinning-radar splits
    used in~\cite{kim2025sherloc}. Both methods are trained
    only on MulRan and evaluated without fine-tuning. AR@1 denotes the
    mean R@1 across the four session pairs.}
    \label{tab:cross_benchmark}
    \setlength{\tabcolsep}{3.0pt}
    \renewcommand{\arraystretch}{1.05}
    \begin{tabular}{lccccc}
        \toprule
        & \multicolumn{2}{c}{Sports Complex}
        & \multicolumn{2}{c}{Library} & \\
        \cmidrule(lr){2-3}
        \cmidrule(lr){4-5}
        Method
        & \rotatebox{45}{01$\rightarrow$02}
        & \rotatebox{45}{01$\rightarrow$03}
        & \rotatebox{45}{01$\rightarrow$02}
        & \rotatebox{45}{01$\rightarrow$03}
        & AR@1 \\
        \midrule
        RadarLoc~\cite{komorowski2021radarloc}
        & 0.965 & 0.938 & 0.989 & 0.867 & 0.940 \\
        \rowcolor{green!20}
        \textbf{\coolname{}}
        & \textbf{0.986}
        & \textbf{0.973}
        & \textbf{0.993}
        & \textbf{0.880}
        & \textbf{0.958} \\
        \bottomrule
    \end{tabular}
\end{table}

As shown in Tab.~\ref{tab:cross_benchmark}, \coolname{} outperforms
RadarLoc on all four HeRCULES session pairs and increases AR@1 from
0.940 to 0.958. The largest improvement occurs on Sports Complex
01$\rightarrow$03, where R@1 increases from 0.938 to 0.973. These
results suggest that the representation learned by \coolname{} retains
its advantage when transferred to previously unseen environments and
conditions without additional fine-tuning.

\subsection{Qualitative Analysis}
\label{sec:qualitative}

Fig.~\ref{fig:qualitative} presents representative Sejong examples
for which RadarLoc returns an incorrect top-ranked match, whereas
\coolname{} identifies the correct place under the 3\,m criterion.
The selected cases include broad curved returns, asymmetric scene
structure, and open-road environments with limited distinctive
landmarks. In each case, geographically different locations produce
similar overall radar-response distributions, making them difficult to
distinguish using conventional global aggregation.

\coolname{} correctly recognizes all three places, suggesting that
spatially gated feature-correlation aggregation preserves scene
relationships that help distinguish structurally similar observations.
These examples are consistent with the improvements observed at the
stricter 3\,m threshold, where accurate recognition requires finer
discrimination between nearby and visually similar places.

\begin{figure}[t]
    \centering
    \includegraphics[width=1\columnwidth]{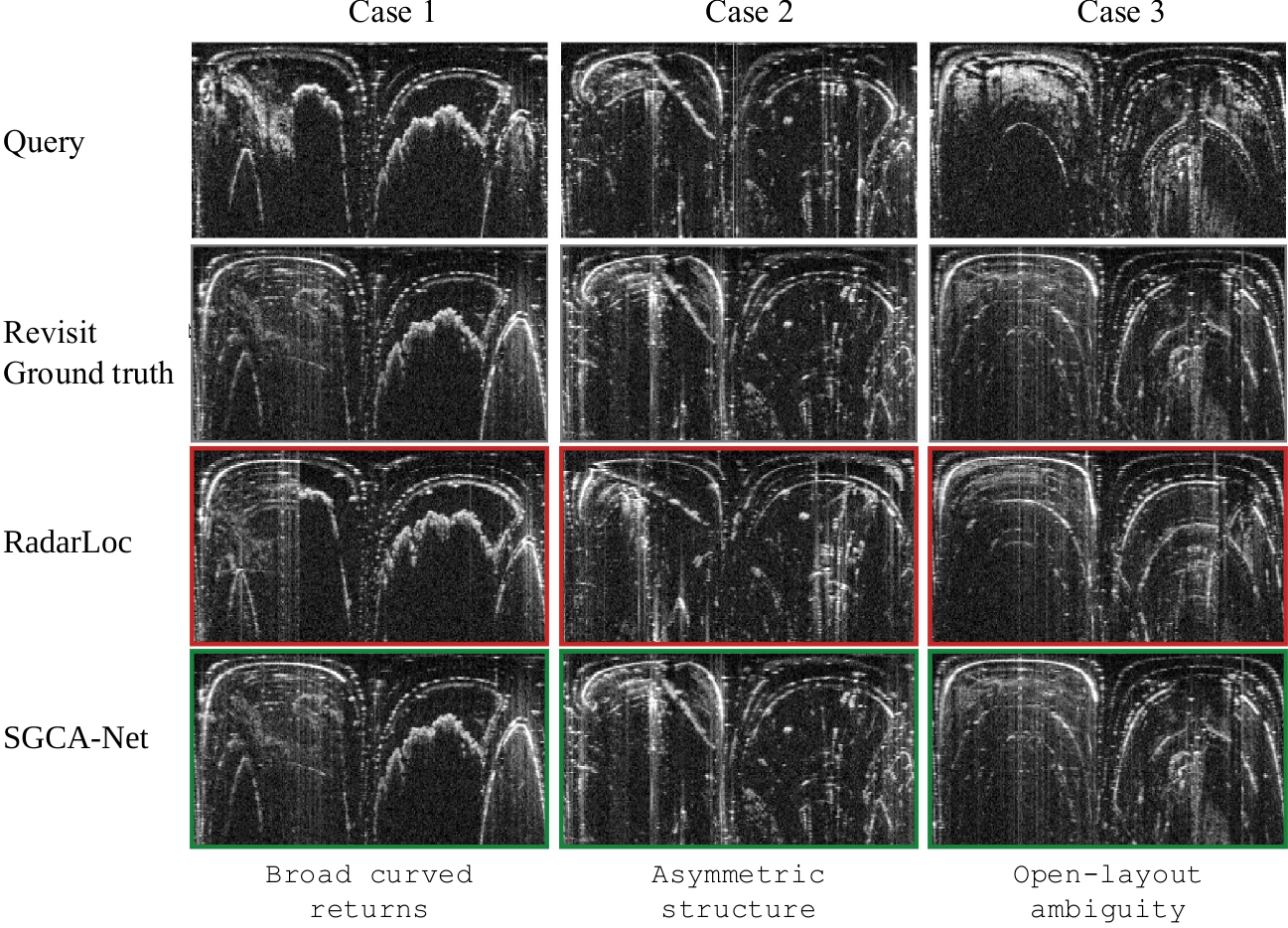}
    \caption{
    Representative retrieval results under the 3\,m
    criterion. For each case, the rows show the query, the
    ground-truth revisit, and the top-ranked results from RadarLoc and
    SGCA-Net. Green and red borders denote correct and incorrect
    retrievals, respectively. RadarLoc is confused by structurally
    similar alternatives, whereas SGCA-Net retrieves the correct
    revisit.}
    \label{fig:qualitative}
\end{figure}

\section{Conclusion and Future Work}
\label{sec:conclusion}

This paper introduced \coolname{}, a spinning radar place recognition framework that combines rotation-robust feature extraction with spatially gated feature-correlation aggregation. By modelling pairwise relationships among radar features while reducing the influence of unstable and ambiguous regions, \coolname{} provides a more discriminative global representation than conventional first-order aggregation. Experiments on MulRan demonstrate consistent improvements over representative RPR methods across diverse environments and under substantial heading variation. Evaluation on HeRCULES further demonstrates that \coolname{} generalizes to unseen environments and radar sensors without fine-tuning. Future work will investigate more computationally efficient feature-correlation representations and extend the evaluation to additional radar sensors, datasets, and long-term environmental changes. We also plan to explore uncertainty-aware spatial weighting to better account for transient and unreliable radar responses.

\section{Acknowledgments}
The authors would like to acknowledge support from the CSIRO's Generative AI for Radar Self-Supervised Learning project in partnership with Boeing.

\balance{}

{\small
\bibliographystyle{IEEEtran}
\bibliography{references}
}

\end{document}